\documentclass{amsart} 
\usepackage{amsmath,amssymb,natbib,amsthm,xfrac,subfigure}
\usepackage{url} 
\usepackage{tabularx,color} 
\usepackage{dsfont} 
\usepackage{epsf} 
\usepackage{algorithmic,algorithm,mathtools} 
\usepackage{wrapfig}

\usepackage{graphicx}

\newtheorem{definition}{Definition}
\newtheorem{theorem}{Theorem}

\newtheorem{lemma}{Lemma}

\newtheorem{property}{Property}
\newtheorem{example}{Example}
\newtheorem{remark}{Remark}

\newcommand{\ERED}[1]{\boldsymbol{\textcolor{red}{#1}}} 
\setlength{\intextsep}{8pt}
\setlength{\columnsep}{10pt}

\title[Sinkhorn Distances]{Sinkhorn Distances: Lightspeed Computation of Optimal Transportation Distances}

\newcommand{\BEAS}{\begin{eqnarray*}}
\newcommand{\EEAS}{\end{eqnarray*}}
\newcommand{\BEA}{\begin{eqnarray}}
\newcommand{\EEA}{\end{eqnarray}}
\newcommand{\BEQ}{\begin{equation}}
\newcommand{\EEQ}{\end{equation}}
\newcommand{\BIT}{\begin{itemize}}
\newcommand{\EIT}{\end{itemize}}
\newcommand{\BNUM}{\begin{enumerate}}
\newcommand{\ENUM}{\end{enumerate}}

\newcommand{\BA}{\begin{array}}
\newcommand{\EA}{\end{array}}
\newcommand{\BC}{\begin{center}}
\newcommand{\EC}{\end{center}}


\newcommand{\ie}{{\it i.e.}}

\newcommand{\ones}{\mathbf 1}




\newcommand{\diag}{\mathop{\bf diag}}




\newcommand{\argmin}{\mathop{\rm argmin}}







\newcounter{exno}

%
{\begin{quote}}{\end{quote}}

%
\makeatother

\usepackage{dsfont}

\definecolor{puorange}{rgb}{0.70,0.15,0}
\definecolor{bluegray}{rgb}{0.05,0,0.8}
\definecolor{greengray}{rgb}{0.05,0.50,0.15}
\definecolor{darkbrown}{rgb}{0.40,0.2,0.05}
\definecolor{darkcyan}{rgb}{0,0.4,1}
\definecolor{black}{rgb}{0,0,0}

\newcommand{\EBGE}[1]{\boldsymbol{\textcolor{bluegray}{#1}}}

\newcommand{\dotprod}[2]{\ensuremath{\langle #1 , #2\,\rangle}}

\usepackage{enumitem}
\usepackage{enumerate}
\usepackage{amsmath}

\def\RR{\mathbb{R}}

\def\Mcal{\mathcal{M}}

\def\Lcal{\mathcal{L}}

\def\OMIT#1{}

 \DeclareMathOperator{\kl}{kl}

\DeclareMathOperator{\defi}{def}

\DeclareMathOperator{\defeq}{\overset{\defi}{=}}

\usepackage{enumerate,multirow}

\providecommand{\norm}[1]{\lVert#1\rVert}

\usepackage{tabularx}

\makeatletter
\newif\if@borderstar
\def\bordermatrix{\@ifnextchar*{%
  \@borderstartrue\@bordermatrix@i}{\@borderstarfalse\@bordermatrix@i*}%
}
\def\@bordermatrix@i*{\@ifnextchar[{%
  \@bordermatrix@ii}{\@bordermatrix@ii[()]}
}
\def\@bordermatrix@ii[#1]#2{%
  \begingroup
    \m@th\@tempdima8.75\p@\setbox\z@\vbox{%
      \def\cr{\crcr\noalign{\kern 2\p@\global\let\cr\endline }}%
      \ialign {$##$\hfil\kern 2\p@\kern\@tempdima & \thinspace %
      \hfil $##$\hfil && \quad\hfil $##$\hfil\crcr\omit\strut %
      \hfil\crcr\noalign{\kern -\baselineskip}#2\crcr\omit %
      \strut\cr}}%
    \setbox\tw@\vbox{\unvcopy\z@\global\setbox\@ne\lastbox}%
    \setbox\tw@\hbox{\unhbox\@ne\unskip\global\setbox\@ne\lastbox}%
    \setbox\tw@\hbox{%
      $\kern\wd\@ne\kern -\@tempdima\left\@firstoftwo#1%
        \if@borderstar\kern2pt\else\kern -\wd\@ne\fi%
      \global\setbox\@ne\vbox{\box\@ne\if@borderstar\else\kern 2\p@\fi}%
      \vcenter{\if@borderstar\else\kern -\ht\@ne\fi%
        \unvbox\z@\kern-\if@borderstar2\fi\baselineskip}%
        \if@borderstar\kern-2\@tempdima\kern2\p@\else\,\fi\right\@secondoftwo#1 $%
    }\null \;\vbox{\kern\ht\@ne\box\tw@}%
  \endgroup
}
\makeatother

\providecommand{\norm}[1]{\lVert#1\rVert}

\def\kl{\mathbf{KL}}
\def\KL{\mathbf{KL}}

\author{Marco Cuturi}
\address{Graduate School of Informatics, Kyoto University}
\email{mcuturi@i.kyoto-u.ac.jp} 

%



\begin{document}

\maketitle

\begin{abstract}
Optimal transportation distances are a fundamental family of parameterized distances for histograms. Despite their appealing theoretical properties, excellent performance in retrieval tasks and intuitive formulation, their computation involves the resolution of a linear program whose cost is prohibitive whenever the histograms' dimension exceeds a few hundreds. We propose in this work a new family of optimal transportation distances that look at transportation problems from a maximum-entropy perspective. We smooth the classical optimal transportation problem with an entropic regularization term, and show that the resulting optimum is also a distance which can be computed through Sinkhorn-Knopp's matrix scaling algorithm at a speed that is several orders of magnitude faster than that of transportation solvers. We also report improved performance over classical optimal transportation distances on the MNIST benchmark problem.
\end{abstract}

\section{Introduction}
Optimal transportation distances~\citep[\S6]{villani09} -- also known as Earth Mover's following the seminal work of~\citet{rubner1997earth} and their application to computer vision -- hold a special place among other distances in the probability simplex. Compared to other classic distances or divergences, such as Hellinger, $\chi_2$, Kullback-Leibler or Total Variation, they are the only ones to be parameterized. This parameter -- the \emph{ground metric} -- plays an important role to handle high-dimensional histograms: the ground metric provides a natural way to handle \emph{redundant} features that are bound to appear in high-dimensional histograms (think synonyms for bags-of-words), in the same way that Mahalanobis distances can correct for statistical \emph{correlations} between vector coordinates.

The central role played by histograms and bags-of-features in most data analysis tasks and the good performance of optimal transportation distances in practice has generated ample interest, both from a theoretical point of view ~\citep{levina2001earth,indyk,naor-2005,indyk2009} and a pracical aspect, mostly to compare images~\citep{GraumanD04,ling2007efficient,gudmundsson2007small,shirdhonkar2008approximate}. Optimal transportation distances have, however, a very clear drawback. No matter what the algorithm employed -- network simplex or interior point methods -- their cost scales at least in $O(d^3log(d))$ when computing the distance between a pair of histograms of dimension $d$, in the general case where no restrictions are placed upon the ground metric parameter~\citep[\S2.1]{Pele-iccv2009}. This speed can be improved by ensuring that the ground metric observes certain constraints and/or by accepting some approximation errors. However, when these restrictions do not apply, computing a single distance between a pair of histograms of dimension in the few hundreds can take more than a few seconds. This issue severely hinders the applicability of optimal transportation distances in large-scale data analysis and goes as far as putting into question their relevance within the field of machine learning.

Our aim in this paper is to show that the optimal transportation problem can be regularized by an entropic term, following the maximum-entropy principle. We argue that this regularization is intuitive given the geometry of the optimal transportation problem and has, in fact, been long known and favored in transportation theory~\citep{erlander1990gravity}. From an optimization point of view, this regularization has multiple virtues, among which that of turning this LP into a strictly convex problem that can be solved extremely quickly with the Sinkhorn-Knopp matrix scaling algorithm  \citep{sinkhorn1967concerning,knight2008sinkhorn}. This algorithm exhibits  linear convergence and can be trivially parallelized -- \emph{it can be vectorized}. It is therefore amenable to large scale executions on parallel platforms such as GPGPUs. From a practical perspective, we show that, on the benchmark task of classifying MNIST digits, Sinkhorn distances perform better than the EMD and can be computed several orders of magnitude faster over a large sample of dimensions \emph{without making any assumption on the ground metric}. We believe this paper contains all the ingredients that are required for optimal transportation distances to be at last applied on high-dimensional datasets and attract again the attention of the machine learning community.

This paper is organized as follows: we provide reminders on optimal transportation theory in Section~\ref{sec:rem}, introduce Sinkhorn distances in Section~\ref{sec:grav} and provide algorithmic details in Section~\ref{sec:computing}. We follow with an empirical study in Section~\ref{sec:exp} before concluding.

\section{Reminders on Optimal Transportation}\label{sec:rem}
\subsection{Transportation Tables and Joint Probabilities}  In what follows, $\langle \cdot,\cdot \rangle$ stands for the Frobenius dot-product. For two histograms $r$ and $c$ in the simplex $\Sigma_d\defeq \{x\in \RR^d_+: x^T\ones_d=1\}$,  we write $U(r,c)$ for the transportation polytope of $r$ and $c$, namely the polyhedral set of $d\times d$ matrices:
$$U(r,c)\defeq \{P\in\RR_+^{d\times d}\; |\; P\ones_d=r, P^T\ones_d=c\},$$
where $\ones_d$ is the $d$ dimensional vector of ones. $U(r,c)$ contains all nonnegative $d\times d$ matrices with row and column sums $r$ and $c$ respectively. $U(r,c)$ has a probabilistic interpretation: for $X$ and $Y$ two multinomial random variables taking values in $\{1,\cdots,d\}$, each with distribution $r$ and $c$ respectively, the set $U(r,c)$ contains all possible \emph{joint probabilities} of $(X,Y)$. Indeed, any matrix $P\in U(r,c)$ can be identified with a joint probability for $(X,Y)$ such that $p(X=i,Y=j)=p_{ij}$. Such joint probabilities are also known as \emph{contingency tables}. We define the entropy $h$ and the Kullback-Leibler divergences of these tables and their marginals as
\begin{gather*}
r\in\Sigma_d,\quad h(r)=-\sum_{i=1}^d r_i \log r_i,\quad \quad P\in U(r,c),\quad h(P)=-\sum_{i,j=1}^d p_{ij} \log p_{ij}\\ P,Q \in U(r,c),\quad \kl(P\|Q)=\sum_{ij}p_{ij}\log\frac{p_{ij}}{q_{ij}}.
\end{gather*}

\subsection{Optimal Transportation} Given a $d\times d$ cost matrix $M$, the cost of mapping $r$ to $c$ using a transportation matrix (or joint probability) $P$ can be quantified  as $\dotprod{P}{M}$. The following problem:
$$
d_M(r,c) \defeq \min_{P\in U(r,c)} \dotprod{P}{M}.
$$
is called an \emph{optimal transportation} problem between $r$ and $c$ given cost $M$. An optimal table $P^\star$ for this problem can be obtained with the network simplex~\citep[\S9]{ahuja1993network} as well as other approaches~\citep{orlin1993faster}. The optimum of this problem, $d_M(r,c)$, is a distance~\citep[\S6.1]{villani09} whenever the matrix $M$ is itself a metric matrix, namely whenever $M$ belongs to the cone of distance matrices~\citep{avis1980extreme,brickell2008metric}:
$$\Mcal=\{M\in\RR^{d\times d}_+:\forall i\leq d, m_{ii}=0;\, \forall i,j,k\leq d, m_{ij}\leq m_{ik} + m_{kj}\}.$$
For a general matrix $M$, the worst case complexity of computing that optimum with any of the algorithms known so far scales in $O(d^3\log d)$ and turns out to be super-cubic in practice as well~\citep[\S2.1]{Pele-iccv2009}. Much faster speeds can be obtained however when placing all sorts of restrictions on $M$ and accepting approximated solutions, albeit at a cost in performance~\citep{GraumanD04} and a loss in applicability.

\section{Sinkhorn Distances}\label{sec:grav}
We consider in this section a family of optimal transportation distances whose feasible set is the not the whole of $U(r,c)$, but a parameterized restricted set of joint probability matrices.

\subsection{Entropic Constraints on Joint Probabilities}
We recall a basic information theoretic inequality~\citep[\S2]{cover91elements} which applies to all joint probabilities: 
\begin{equation}\label{eq:basic}
	\forall r,c \in \Sigma_d, \forall P\in U(r,c), h(P) \leq h(r)+h(c).
\end{equation}

This bound is tight, since the table $rc^T$ -- known as the independence table~\citep{good1963maximum} -- has an entropy of $h(rc^T)=h(r)+h(c)$. By the concavity of entropy, we can introduce the convex set $U_\alpha(r,c)\subset U(r,c)$ as
$$U_\alpha(r,c)\defeq \{P\in U(r,c)\,|\,\KL(P\|rc^T)\leq \alpha \} = \{P\in U(r,c)\,|\, h(P)\geq h(r)+h(c)-\alpha\}$$

These definitions are indeed equivalent, since one can easily check that 
$$\KL(P\|rc^T)= h(r)+h(c)-h(P),$$ a quantity which is also the mutual information $I(X\|Y)$ of two random variables $(X,Y)$ should they follow the joint probability $P$~\cite[\S2]{cover91elements}. Hence, all tables $P$ whose Kullback-Leibler divergence to the table $rc^T$ is constrained to lie below a certain threshold can be interpreted as the set of tables $P$ in $U(r,c)$ which have \emph{sufficient} entropy with respect to $h(r)$ and $h(c)$, or joint probabilities which display a small enough \emph{mutual information}.

As a classic result of linear optimization, the optimum of classical optimal transportation distances is achieved on vertices of $U(r,c)$, that is $d\times d$ matrices with only up to $2d-1$ non-zero elements~\citep[\S8.1.3]{brualdi2006combinatorial}. Such plans can be interpreted as quasi-deterministic joint probabilities, since if $p_{ij}>0$, then very few values $p_{ij'}$ will have a non-zero probability. By mitigating the transportation cost objective with an entropic constraint, which is equivalent to following the max-entropy principle~\citep{PhysRev.106.620,dudik2006maximum} and thus for a given level of the cost look for the most smooth joint probability, we argue that we can provide a more robust notion of distance between histograms. Indeed, for a given pair $(r,c)$, finding plausible transportation plans with low cost (where plausibility is measured by entropy) is more informative than finding \emph{extreme plans} that are extremely unlikely to appear in nature.

We note that the idea of regularizing the transportation problem was also considered recently by~\citet{ferradans2013regularized}. In their work, ~\citeauthor{ferradans2013regularized} also argue that an optimal matching may not be sufficiently regular in vision applications (color transfer), and that these undesirable properties can be handled through an adequate relaxation and penalization (through graph-based norms) of the transportation problem. While~\citet{ferradans2013regularized} penalize the transportation problem to obtain a more regular transportation plan, we believe that an entropic regularization yields here a better distance. An illustration of this idea is provided in Figure~\ref{fig:jolidessin}.  For reasons that will become clear in Section~\ref{sec:computing}, we call such distances \emph{Sinkhorn distances}.
\begin{definition}[Sinkhorn Distances]\label{def:sinkhorn} $\displaystyle d_{M,\alpha}(r,c)\defeq\min_{P\in U_\alpha(r,c)}\dotprod{P}{M}$
\end{definition}

\begin{figure}\centering
{\LARGE{\BC\scalebox{1}{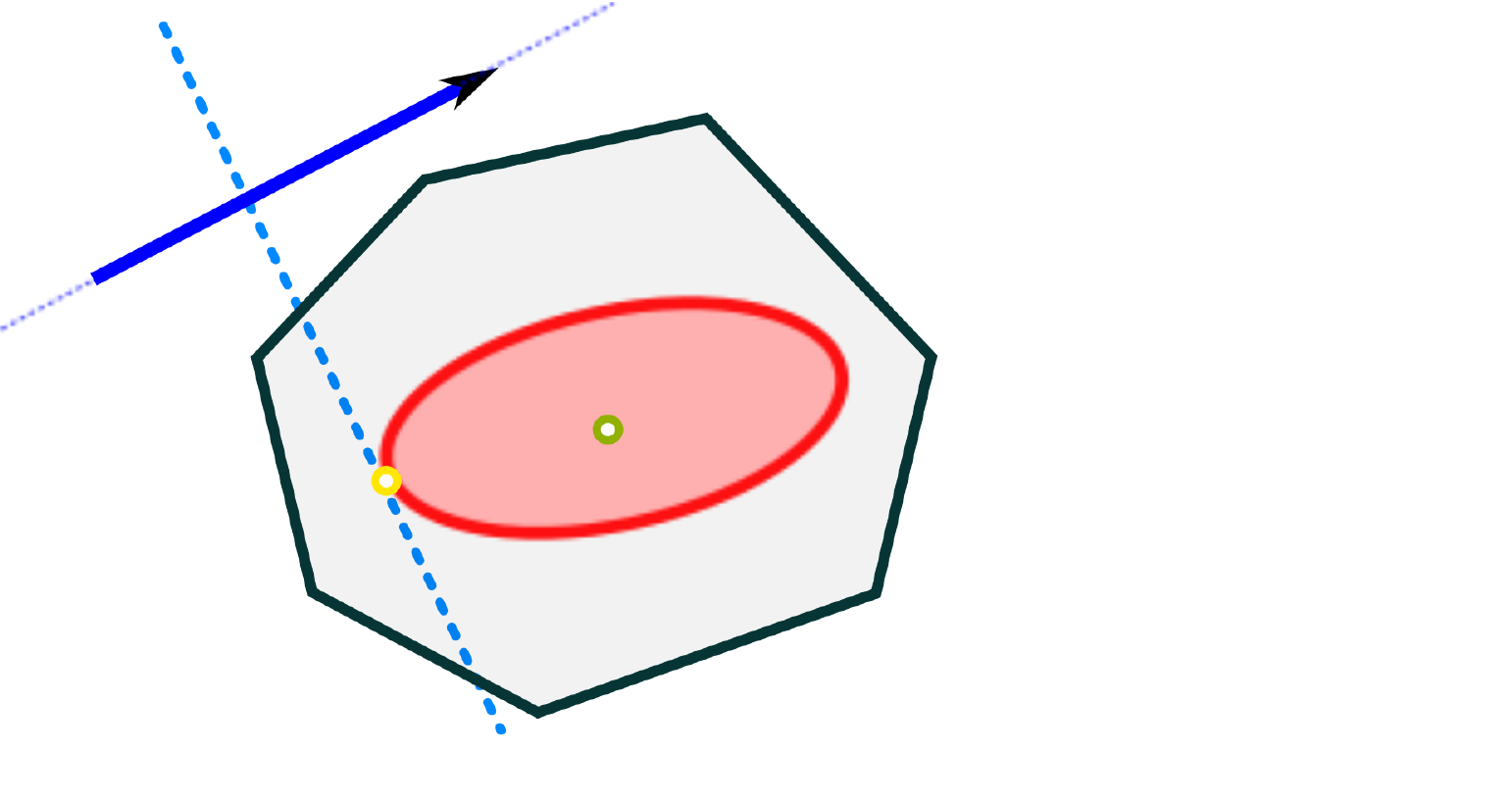}\EC}}
\caption{Schematic view of the transportation polytope and the Kullback-Leibler ball of level $\alpha$ that surrounds the independence table $rc^T$. The Sinkhorn distance is the dot product of $M$ with the optimal transportation table in that ball.}\label{fig:jolidessin}
\end{figure}

\subsection{Metric Properties}
When $\alpha$ is large enough, the Sinkhorn distance coincides with the classic optimal transportation distance. When $\alpha=0$, the Sinkhorn distance has a closed form and becomes a negative definite kernel if one assumes that $M$ is itself a negative definite distance, that is a Euclidean distance matrix.

\begin{property}\label{prop:nsd} For $\alpha$ large enough, the Sinkhorn distance $d_{M,\alpha}$ is the transportation distance $d_{M}$.\end{property}
\begin{proof}Since for any $P\in U(r,c), h(P)$ is lower bounded by $\tfrac{1}{2}(h(r)+h(c))$, we have that for $t$ large enough $U_{t}(r,c)= U(r,c)$ and thus both quantities coincide.\end{proof}

\begin{property}[Independence Kernel] When $\alpha=0$ and $M$ is a Euclidean Distance Matrix\footnote{$\exists n, \exists \varphi_1,\cdots,\varphi_d\in\RR^n$ such that $m_{ij}=\norm{\varphi_i - \varphi_j}_2^2$ \citep[\S5]{dattorro2005convex}. Recall that, in that case, $M.^t=[m_{ij}^t]$, $0<t<1$ is also a Euclidean distance matrix~\cite[p.78,\S3.2.10]{berg84harmonic} }, the Sinkhorn distance has the explicit form $d_{M,0}=r^T M c$. $d_{M,0}$ is a negative definite kernel, \ie\, $e^{-t r^TMc}$ is a positive definite kernel $\forall t>0$. We call this kernel the independence kernel.\end{property}

The proof is provided in the appendix. Beyond these two extreme cases, the main theorem of this section states that Sinkhorn distances are symmetric and satisfy triangle inequalities for all possible values of $\alpha$. Since for $\alpha$ small enough $d_{M,\alpha}(r,r)>0$ for any $r$ such that $h(r)>0$, Sinkhorn distances cannot satisfy the \emph{coincidence axiom}\footnote{satisfied if $d(x,y)=0\Leftrightarrow x=y$ holds for all $x,y$}. However, multiplying $d_{M,\alpha}$ by $\ones_{r\ne c}$ suffices to recover the coincidence property if needed.

\begin{theorem}\label{theo:metric} For all $\alpha\geq 0$ and $M\in\Mcal$, $d_{M,\alpha}$ is symmetric and satisfies all triangle inequalities. The function $(r,c)\mapsto \ones_{r\ne c} d_{M,\alpha}(r,c)$ satisfies all three distance axioms. 
\end{theorem}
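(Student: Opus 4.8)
The plan is to handle the three asserted properties in turn, leaving the triangle inequality for last since it is the crux. As a preliminary I would note that $rc^T$ always lies in $U_\alpha(r,c)$ (its divergence to itself is zero), so the feasible set is nonempty; being a closed subset of the bounded transportation polytope it is compact, and as the objective is linear the minimum defining $d_{M,\alpha}$ is attained. For \emph{symmetry} I would use the transpose map $P\mapsto P^T$, which sends $U_\alpha(r,c)$ bijectively onto $U_\alpha(c,r)$: it interchanges the two marginal constraints, and a relabeling of indices gives $\KL(P^T\|cr^T)=\KL(P\|rc^T)$, so the entropic constraint is preserved. Since $M$ is symmetric (being a distance matrix), $\dotprod{P^T}{M}=\dotprod{P}{M}$, and the two minimization problems share the same value.

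For nonnegativity and the coincidence axiom of the thresholded function, I would exploit the inclusion $U_\alpha(r,c)\subseteq U(r,c)$: minimizing the same nonnegative objective over a smaller set gives $d_{M,\alpha}(r,c)\ge d_M(r,c)\ge 0$. Because $M\in\Mcal$, the classical transportation distance $d_M$ is a genuine distance, so $d_M(r,c)>0$ whenever $r\ne c$, whence $d_{M,\alpha}(r,c)>0$ there as well. Multiplying by $\ones_{r\ne c}$ then makes the value vanish exactly when $r=c$, which combined with symmetry and the triangle inequality yields all three axioms.

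The heart of the argument is the triangle inequality $d_{M,\alpha}(r,c)\le d_{M,\alpha}(r,s)+d_{M,\alpha}(s,c)$, for which I would adapt the gluing lemma of optimal transport. Let $P$ attain $d_{M,\alpha}(r,s)$ and $Q$ attain $d_{M,\alpha}(s,c)$, and form the glued table $S=P\,\mathrm{diag}(1/s)\,Q$, with the convention that indices $j$ with $s_j=0$ are dropped (legitimate, since the corresponding column of $P$ and row of $Q$ then vanish). Using the shared marginal $P^T\ones_d=s=Q\ones_d$ one checks directly that $S\in U(r,c)$. For the cost, I would expand $S_{ik}=\sum_j P_{ij}Q_{jk}/s_j$ and apply the metric inequality $m_{ik}\le m_{ij}+m_{jk}$; the resulting double sum splits into two pieces that collapse, via the same marginal identities, to $\dotprod{P}{M}$ and $\dotprod{Q}{M}$, giving $\dotprod{S}{M}\le d_{M,\alpha}(r,s)+d_{M,\alpha}(s,c)$.

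The remaining and most delicate point is that $S$ genuinely lies in $U_\alpha(r,c)$, i.e. that the entropic budget survives the gluing. This is where the probabilistic reading is indispensable: $S$ is the $(X,Z)$-marginal of the Markov chain $X\to Y\to Z$ with law $p(i,j,k)=P_{ij}Q_{jk}/s_j$, for which $P$ and $Q$ are the $(X,Y)$- and $(Y,Z)$-marginals. Recalling from the previous subsection that $\KL(\,\cdot\,\|\text{product of marginals})$ is mutual information, feasibility of $P$ and $Q$ reads $I(X;Y)\le\alpha$ and $I(Y;Z)\le\alpha$, while the goal is $\KL(S\|rc^T)=I(X;Z)\le\alpha$. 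This follows at once from the data processing inequality, $I(X;Z)\le\min\{I(X;Y),I(Y;Z)\}\le\alpha$. I expect this step---recognizing the glued plan as a Markov chain and invoking data processing to keep it inside the KL ball---to be the main obstacle, whereas the marginal bookkeeping and the cost estimate are routine.
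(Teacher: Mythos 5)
Your proposal is correct and follows essentially the same route as the paper: the triangle inequality via the glued table $s_{ik}=\sum_j p_{ij}q_{jk}/y_j$, whose membership in $U_\alpha(r,c)$ is secured by viewing it as the $(X,Z)$-marginal of a Markov chain and invoking the data processing inequality, followed by the metric inequality on $M$ to split the cost. Your additional remarks (attainment of the minimum, the transpose argument for symmetry, and deducing strict positivity from $d_{M,\alpha}\geq d_M$) only make explicit steps the paper leaves implicit.
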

The gluing lemma~\cite[Lemma 7.6]{villani} plays a crucial role to prove that optimal transportation distances are indeed distances. The version we use below is slightly different since it incorporates the entropic constraint.
\begin{lemma}[Gluing Lemma With Entropic Constraint]\label{lem:gluing} Let $\alpha\geq 0$ and $x,y,z$ be three elements of $\Sigma_d$. Let $P\in U_\alpha(x,y)$ and $Q\in U_\alpha(y,z)$ be two joint probabilities in the transportation polytopes of $(x,y)$ and $(y,z)$ with sufficient entropy. Let $S$ be the $d\times d$ matrix whose $(i,k)$'s coefficient is $s_{ik}\defeq \sum_{j} \frac{p_{ij}q_{jk}}{y_j}$. Then $S\in U_{\alpha}(x,z)$.
\end{lemma}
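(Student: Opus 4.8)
The plan is to verify three things about the glued matrix $S$: that it is nonnegative with the correct marginals (so $S\in U(x,z)$), and that its mutual information stays below $\alpha$ (so in fact $S\in U_\alpha(x,z)$). The first part is the standard gluing construction. Nonnegativity of $s_{ik}=\sum_j \frac{p_{ij}q_{jk}}{y_j}$ is immediate since $P,Q\geq 0$ and $y_j>0$ wherever the summand matters. For the row sums I would compute $\sum_k s_{ik}=\sum_j \frac{p_{ij}}{y_j}\sum_k q_{jk}=\sum_j \frac{p_{ij}}{y_j} y_j = \sum_j p_{ij}=x_i$, using that $Q\in U(y,z)$ has row sums $y$ and $P\in U(x,y)$ has row sums $x$; the column-sum computation $\sum_i s_{ik}=z_k$ is symmetric, using the column marginals of $P$ and $Q$. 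I would remark that a clean way to see all of this at once is the probabilistic reading: build three random variables $(X,Y,Z)$ where $(X,Y)\sim P$, $(Y,Z)\sim Q$, and $X,Z$ are conditionally independent given $Y$; then $S$ is exactly the joint law of $(X,Z)$, which automatically has marginals $x$ and $z$.

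The crux is the entropic constraint, i.e.\ showing $\KL(S\|xz^T)\leq \alpha$, equivalently $I(X\,;Z)\leq\alpha$. Here the probabilistic picture does the heavy lifting. By construction the triple $(X,Y,Z)$ forms a Markov chain $X\to Y\to Z$, so the data processing inequality gives $I(X;Z)\leq I(X;Y)$ and also $I(X;Z)\leq I(Y;Z)$. Since $P\in U_\alpha(x,y)$ means $I(X;Y)=\KL(P\|xy^T)\leq\alpha$ (using the identity $\KL(P\|rc^T)=h(r)+h(c)-h(P)=I(X;Y)$ recalled earlier in the excerpt), we conclude $I(X;Z)\leq\alpha$, which is precisely $\KL(S\|xz^T)\leq\alpha$. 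Hence $S\in U_\alpha(x,z)$.

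The main obstacle, and the one step worth stating carefully rather than invoking as folklore, is justifying that the constructed $(X,Y,Z)$ really is a well-defined Markov chain whose two-dimensional marginals recover $P$ and $Q$, since this is what licenses the data processing inequality. Concretely I would define the joint law of the triple by $\Pr(X=i,Y=j,Z=k)\defeq \frac{p_{ij}q_{jk}}{y_j}$ (setting terms to zero when $y_j=0$) and check that it is a valid probability distribution summing to one, that marginalizing over $k$ returns $p_{ij}$ and marginalizing over $i$ returns $q_{jk}$, and that it factors as $\Pr(X=i\mid Y=j)\Pr(Z=k\mid Y=j)$ times $\Pr(Y=j)$, which is exactly the conditional independence $X\perp Z\mid Y$. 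Once this is in place the data processing inequality is standard and the entropic bound follows with no further computation. An alternative, should one prefer to avoid probabilistic language entirely, is to prove $\KL(S\|xz^T)\leq\KL(P\|xy^T)$ directly via the log-sum inequality applied to the definition of $s_{ik}$, but the Markov-chain argument is cleaner and I would present that as the main line.
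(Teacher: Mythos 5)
Your proposal is correct and follows essentially the same route as the paper: both construct the three-way joint law $t_{ijk}=p_{ij}q_{jk}/y_j$, verify the marginals of $S$ by direct summation, and obtain the entropic bound by observing that $(X,Y,Z)$ is a Markov chain $X\to Y\to Z$ and invoking the data processing inequality $I(X;Z)\leq I(X;Y)\leq\alpha$. Your added care in checking that the triple is a genuine probability distribution whose two-dimensional marginals recover $P$ and $Q$ is a welcome refinement of the same argument, not a different one.
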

The proof is provided in the appendix. We can prove the triangle inequality for $d_{M,\alpha}$ by using the same proof strategy than that used for classical transportation distances.

\emph{Proof of Theorem~\ref{theo:metric}.} The symmetry of $d_{M,\alpha}$ is a direct result of $M$'s symmetry. Let $x,y,z$ be three elements in $\Sigma_d$. Let $P\in U_{\alpha}(x,y)$ and $Q\in U_{\alpha}(y,z)$ be the optimal solutions obtained when computing $d_{M,\alpha}(x,y)$ and $d_{M,\alpha}(y,z)$ respectively. Using the matrix $S$ of $U_{\alpha}(x,z)$ provided in Lemma~\ref{lem:gluing}, we proceed with the following chain of inequalities:
	$$\begin{aligned}
	d_{M,\alpha}(x,z)&=\min_{P\in U_\alpha(x,z)}\dotprod{X}{M} \leq \dotprod{S}{M} = \sum_{ik} m_{ik}\sum_{j} \frac{p_{ij}q_{jk}}{y_j} \\ & \leq \sum_{ijk} \left(m_{ij}+m_{jk}\right) \frac{p_{ij}q_{jk}}{y_j} = \sum_{ijk} m_{ij} \frac{p_{ij}q_{jk}}{y_j} +m_{jk} \frac{p_{ij}q_{jk}}{y_j}  \\
	& =\sum_{ij} m_{ij}p_{ij} \sum_k \frac{q_{jk}}{y_j} + \sum_{jk} m_{jk} q_{jk} \sum_i \frac{p_{ij}}{y_j}\\
	&= \sum_{ij} m_{ij}p_{ij} + \sum_{jk} m_{jk} q_{jk} = d_{M,\alpha}(x,y)+d_{M,\alpha}(y,z).\,\blacksquare\end{aligned}
	$$

\section{Computing Sinkhorn Distances with the Sinkhorn-Knopp Algorithm}\label{sec:computing}
Recall that the Sinkhorn distance (Definition ~\ref{def:sinkhorn}) is  defined through a hard constraint on the entropy of $h(P)$ relative to $h(r)$ and $h(c)$.
In what follows, we consider the same program with a Lagrange multiplier for the entropy constraint,
\begin{equation}\label{eq:pen}
\boxed{d_{M}^{\lambda}(r,c) \defeq \dotprod{P^\lambda}{M},\,\text{ where } P^\lambda=\argmin_{P\in U(r,c)} \dotprod{P}{M} - \frac{1}{\lambda} h(P).}
\end{equation}
By duality theory we have that for every pair $(r,c)$, to each $\alpha$ corresponds an $\lambda\in[0,\infty]$ such that $d_{M,\alpha(r,c)}=d_{M}^{\lambda}(r,c)$. We call $d_{M}^\lambda$ the dual-Sinkhorn divergence and show that it can be computed at a much cheaper cost than the classical optimal transportation problem for reasonable values of $\lambda$.

\subsection{Computing $d_{M}^\lambda$} When $\lambda>0$, the solution $P^\lambda$ is unique by strict convexity of minus the entropy. In fact, $P^\lambda$ is necessarily of the form $u_i e^{-\lambda m_{ij}} v_j$, where $u$ and $v$ are two non-negative vectors uniquely defined up to a multiplicative factor. 
\begin{algorithm}[H]
        \begin{algorithmic}
	\caption{Computation of $d^\lambda_M(r,c)$ using Sinkhorn-Knopp's fixed point iteration}\label{alg:sk}
          \STATE \textbf{Input} \texttt{M}, $\lambda$, \texttt{r}, \texttt{c}.
		  \STATE \texttt{I=(r>0); r=r(I); M=M(I,:); \texttt{K=exp(-$\lambda$*M)}}
		  \STATE Set \texttt{x}=\texttt{ones(length(r),size(c,2))/length(r)};
		  \WHILE{\texttt{x} changes}
		  \STATE \texttt{x=diag(1./r)*K*(c.*(1./(K'*(1./x))))}
		  \ENDWHILE
		  \STATE  \texttt{u=1./x;  v=c.*(1./(K'*u))}
		  \STATE $d_{M}^\lambda$(\texttt{r},\texttt{c})=\texttt{sum(u.*((K.*M)*v))}
        \end{algorithmic}
      \end{algorithm}    	
This well known fact in transportation theory~\citep{erlander1990gravity} can be indeed checked by forming the Lagrangian $\Lcal(P,\alpha,\beta)$ of the objective of Equation~\eqref{eq:pen} using $\alpha,\beta\geq \mathbf{0}_d$ for each of the two equality constraints in $U(r,c)$. For these two cost vectors $\alpha,\beta$,
$$\Lcal(P,\alpha,\beta)=\sum_{ij} \frac{1}{\lambda}p_{ij}\log p_{ij}+ p_{ij}m_{ij} + \alpha^T(P \ones_d-r) + \beta^T(P^T\ones_d-c)$$
We obtain then, for any couple $(i,j)$, that if $\frac{\partial \Lcal}{\partial p_{ij}^\lambda}=0$, then 
$$
p_{ij}^\lambda= e^{-\frac{1}{2}-\lambda \alpha_i} e^{-\lambda m_{ij}}e^{-\frac{1}{2}-\lambda \beta_j},
$$
and thus recover the form provided above. $P^\lambda$ is thus, by~\citeauthor{sinkhorn1967concerning}'s theorem~\citeyearpar{sinkhorn1967concerning}, the \emph{only matrix} with row-sum $r$ and column-sum $c$ of the form
\begin{equation}\label{eq:sinkhorn}\exists u,v>\mathbf{0}_d : P^\lambda = \diag(u) e^{-\lambda M} \diag(v).\end{equation}
Given $e^{-\lambda M}$ and marginals $r$ and $c$, it is thus sufficient to run enough iterations of~\citeauthor{sinkhorn1967concerning}'s algorithm to converge to a solution $P^\lambda$ of that problem. We provide a one line implementation in Algorithm~\ref{alg:sk}. The case where some coordinates of $r$ or $c$ are null can be easily handled by selecting those elements of $r$ that are strictly positive to obtain the desired table, as shown in the first line of Algorithm~\ref{alg:sk}. Note that Algorithm~\ref{alg:sk} is \emph{vectorized}: it can be used as such to compute the distance between $r$ and a \emph{family of histograms} $C=[c_1,\cdots,c_N]$ by replacing $c$ with $C$. These $O(d^2N)$ linear algebra operations can be very quickly executed by using a GPGPU. 

\subsection{Computing $d_{M,\alpha}$ through $d_{M}^\lambda$} With a naive approach, $d_{M,\alpha}$ can be obtained by computing $d_{M}^\lambda$ iteratively until the entropy of the solution $P^\lambda$ has reached an adequate value $h(r)+h(c)-\alpha$. Since the entropy of $P^\lambda$ decreases monotonically when $\lambda$ increases, this search can be carried out by simple bisection, starting with a small $\lambda$ which is iteratively increased. In what follows, we only consider the dual-Sinkhorn divergence $d_{M}^\lambda$ since it is cheaper to compute and displays good performances in itself. We believe that more clever approaches can be applied to calculate exactly $d_{M,\alpha}$, and we leave this for future work. In the rest of this paper we will now refer to $d_{M}^\lambda$ as the Sinkhorn distance, despite the fact that it is not provably a distance.
\section{Experimental Results}\label{sec:exp}

\subsection{MNIST Digits}\label{subsec:mnist}
We test the performance of Sinkhorn distances on the MNIST digits\footnote{\texttt{http://yann.lecun.com/exdb/mnist/}} dataset, on which the ground metric has a natural interpretation in terms of pixel distances. Each digit is provided as a vector of intensities on a $20\times 20$ pixel grid. We convert each image into a histogram by normalizing each pixel intensity by the total sum of all intensities . 
We consider a subset of $N$ points in the training set of the database, where $N$ ranges within $\{3,5,12,17,25\}\times 10^3$ datapoints.

\begin{figure}\centering
\fbox{\includegraphics[width=9cm]{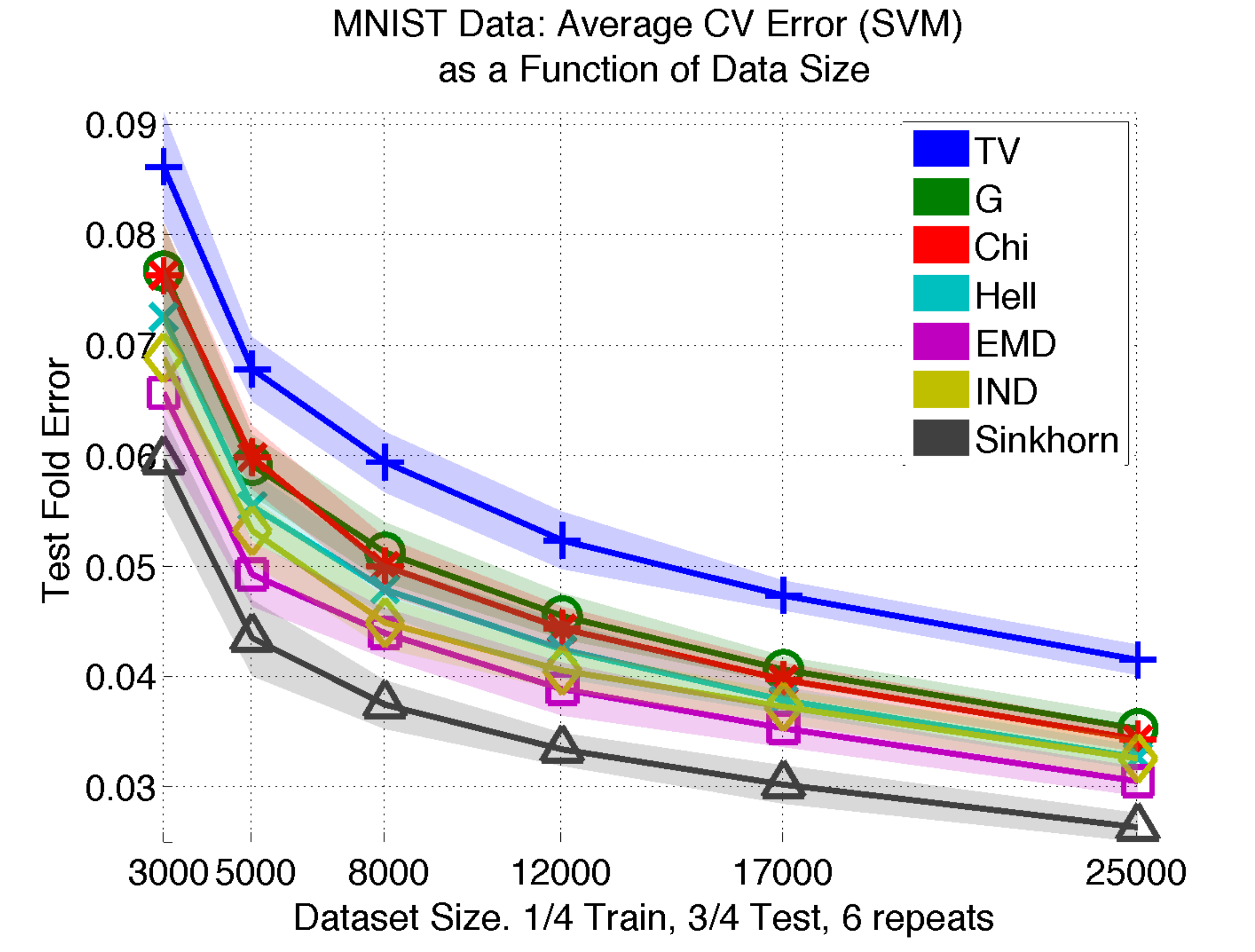}}
\caption{Average test errors with shaded confidence intervals. Errors are computed using 1/4 of the dataset for train and 3/4 for test. Errors are averaged over 4 folds $\times$ 6 repeats = 24 experiments.}
\end{figure}
\subsubsection{Experimental setting}  For each subset of size $N$, we provide mean and standard deviation of classification error using a 4 fold (3 test, 1 train) cross validation scheme repeated 6 times, resulting in 24 different experiments. We study the performance of different distances with the following parameter selection scheme: for each distance $d$, we consider the kernel $e^{-d/t}$, where $t>0$ is chosen by cross validation individually for each training fold within the set $\{1,q_{10}(d),q_{20}(d),q_{50}(d)\}$, where $q_s$ is the $s\%$ quantile of a subset of distances observed in the training fold. 
We regularize non-positive definite kernel matrices resulting from this computation by adding a sufficiently large diagonal term. SVM's were run with \texttt{libsvm} (one-vs-one) for multiclass classification, the regularization constant $C$ being selected by 2 folds/2 repeats cross-validation on the training fold in the set $10^{-2:2:4}$

\subsubsection{Distances} The Hellinger, $\chi_2$, Total Variation and squared Euclidean (Gaussian kernel) distances are used as such. We set the ground metric $M$ to be the Euclidean distance between the $20\times 20$ points in the grid, resulting in a $400\times 400$ distance matrix. We also tried to use Mahalanobis distances on this example with a positive definite matrix equal to \texttt{exp(-tM.\^{}2)}, \texttt{t>0},  as well as its inverse, with varying values of $t$ but none of the results proved competitive. For the Independence kernel, since any Euclidean distance matrix is valid, we consider $[m_{ij}^a]$ where $a\in\{0.01,0.1,1\}$ and choose $a$ by cross-validation on the training set. Smaller values of $a$ seem to be preferable. We select the entropic penalty $\lambda$ of Sinkhorn distances so that the matrix $e^{-\lambda M}$ is relatively diagonally dominant and the resulting transportation not too far from the classic optimal transportation. We select $\lambda$ for each training fold by internal cross-validation within $\{5,7,9,11\}\times 1/q_{50}(M)$ where $q_{50}(M)$ is the median distance between pixels on the grid. We set the number of fixed-point iterations to an arbitrary number of 20 iterations. In most (though not all) folds, the value $\lambda=9$ comes up as the best setting. The Sinkhorn distance beats by a safe margin all other distances, including the EMD.

\begin{figure}\centering
\fbox{\includegraphics[width=9cm]{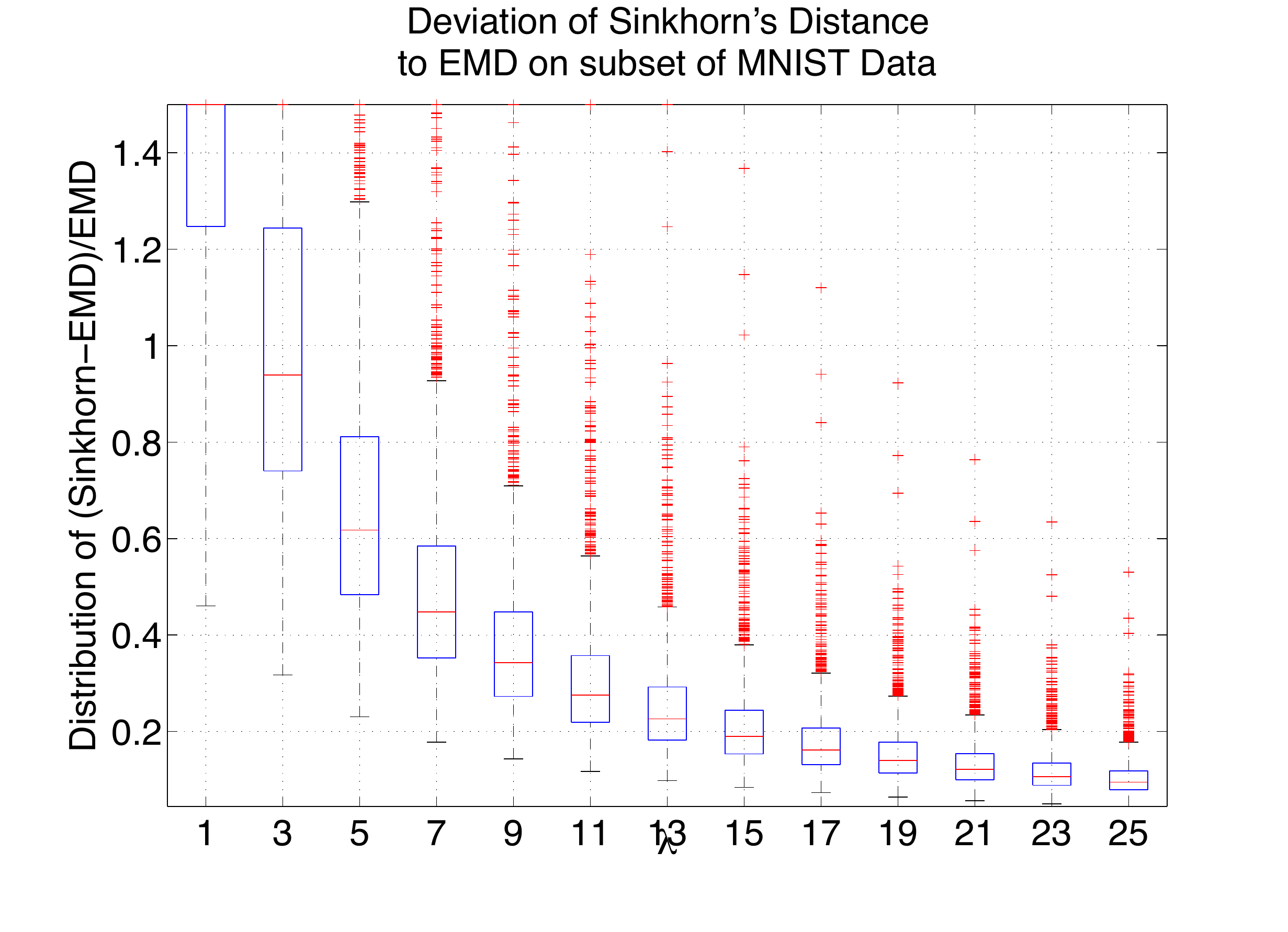}}
\caption{Decrease of the gap between the Sinkhorn distance and the EMD on the MNIST dataset.}\label{fig:decrease}
\end{figure}

\subsection{Does the Sinkhorn Distance Converge to the EMD?}

We study in this section the convergence of Sinkhorn distances towards classical optimal transportation distances as $\lambda$ gets bigger. Because of the additional penalty that appears in \eqref{eq:pen} program, $d_{M}^\lambda(r,c)$ is necessarily larger than $d_M(r,c)$, and we expect this gap to decrease as $\lambda$ increases. Figure~\ref{fig:decrease} illustrates this by plotting the boxplot of distributions of $(d_{M}^\lambda(r,c)-d_M(r,c))/d_M(r,c)$ over $40^2$ pairs of distinct points taken in the MNIST database. As can be observed, even with large values of $\lambda$, Sinkhorn distances hover above the values of EMD distances by about $10\%$. For practical values of $\lambda$ such as $\lambda=9$ selected above we do not expect the Sinkhorn distance to be numerically close to the EMD, nor believe it to be a desirable property.

\begin{figure}\centering
\centering\fbox{\includegraphics[width=9cm]{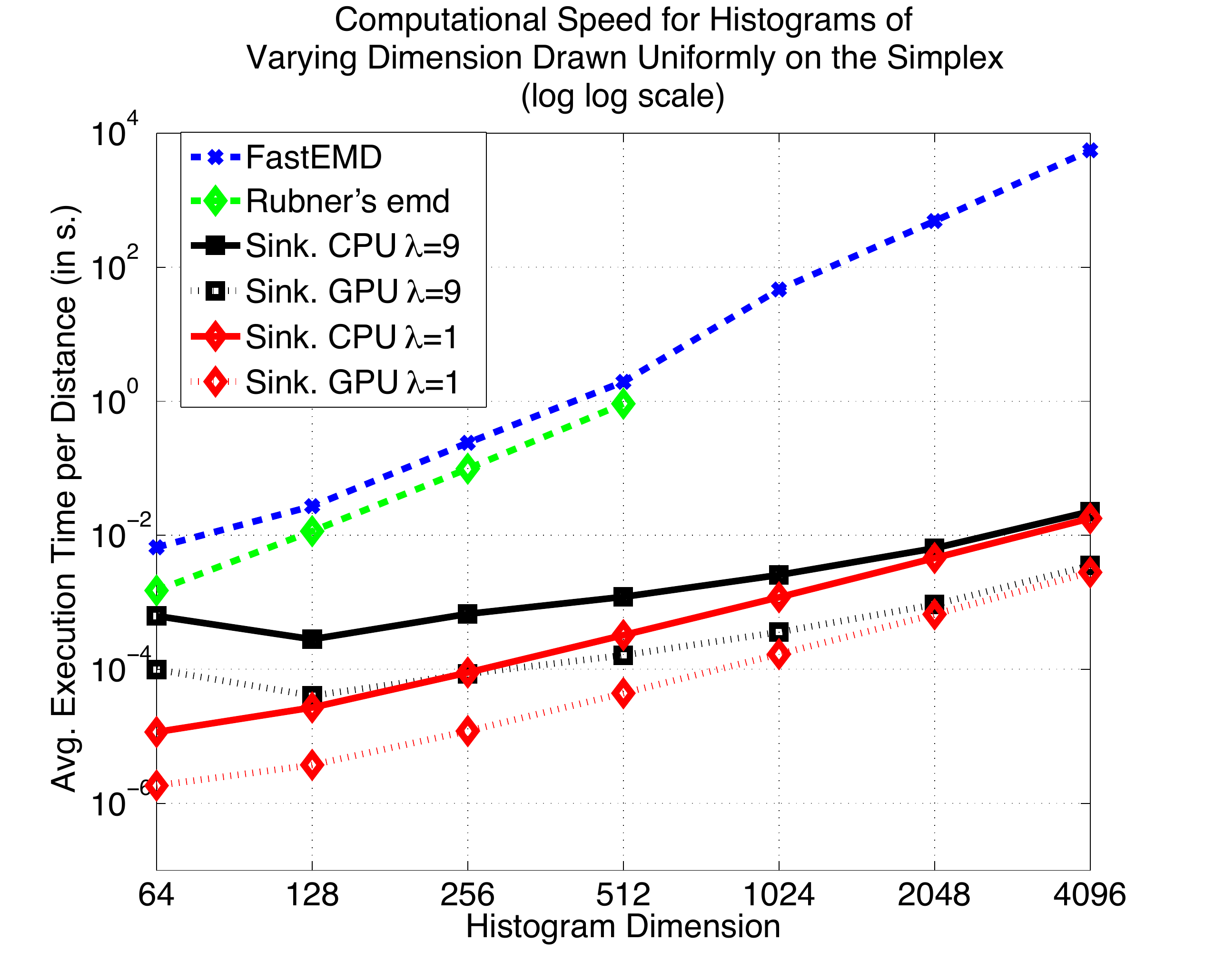}}
\caption{Average computational time required to compute a distance between two histograms sampled uniformly in the $d$ dimensional simplex for varying values of $d$. Sinkhorn distances are run both on a single CPU node and on a GPU card, until the variation in $x$ becomes smaller than $\epsilon=0.01$ in Euclidean norm.}\label{fig:comptime}
\end{figure}
\newpage

\subsection{Several Orders of Magnitude Faster}\label{subsec:computational}
We measure in this section the computational speed of classic optimal transportation distances vs. that of Sinkhorn distances using~\citeauthor{rubner1997earth}'s ~\citeyearpar{rubner1997earth}\footnote{\texttt{http://robotics.stanford.edu/~rubner/emd/default.htm}} and~\citeauthor{Pele-iccv2009}'s~\citeyearpar{Pele-iccv2009}\footnote{\texttt{http://www.cs.huji.ac.il/~ofirpele/FastEMD/code/}, we use \texttt{emd\_hat\_gd\_metric} in these experiments} publicly available implementations. 
We generate points uniformly in the $d$-simplex~\citep{smith2004sampling} and generate random distance matrices $M$ by selecting $d$ points distributed with a spherical Gaussian in dimension $d/10$ to obtain enough variability in the distance matrix. $M$ is then divided by the median of its values, $\texttt{M=M/median(M(:))}$. Sinkhorn distances are implemented in matlab code (see Algorithm~\ref{alg:sk}) while 
\texttt{emd\_mex}, \texttt{emd\_hat\_gd\_metric} are mex/C files. 
The emd distances and Sinkhorn CPU are run on a matlab session with a single working core (2.66 Ghz Xeon). Sinkhorn GPU is run on an NVidia Quadro K5000 card. Following the experimental findings of Section~\ref{subsec:mnist}, we consider two parameters for $\lambda$, $\lambda=1$ and $\lambda=9$. $\lambda=1$ results in a relatively dense matrix $K=e^{-\lambda M}$, with results comparable to that of the Independence kernel, while $\lambda=9$ results in a matrix $K=e^{-\lambda M}$ with mostly negligible values and therefore a matrix with low entropy that is closer to the optimal transportation solution.~\citeauthor{rubner1997earth}'s implementation cannot be run for histograms larger than $d=512$. For large dimensions and on the same CPU, Sinkhorn distances are more than 100.000 faster than EMD solvers given a threshold of $0.01$. Using a GPU results in a speed-up of a supplementary order of magnitude.


\begin{figure}\centering
\fbox{\includegraphics[width=9cm]{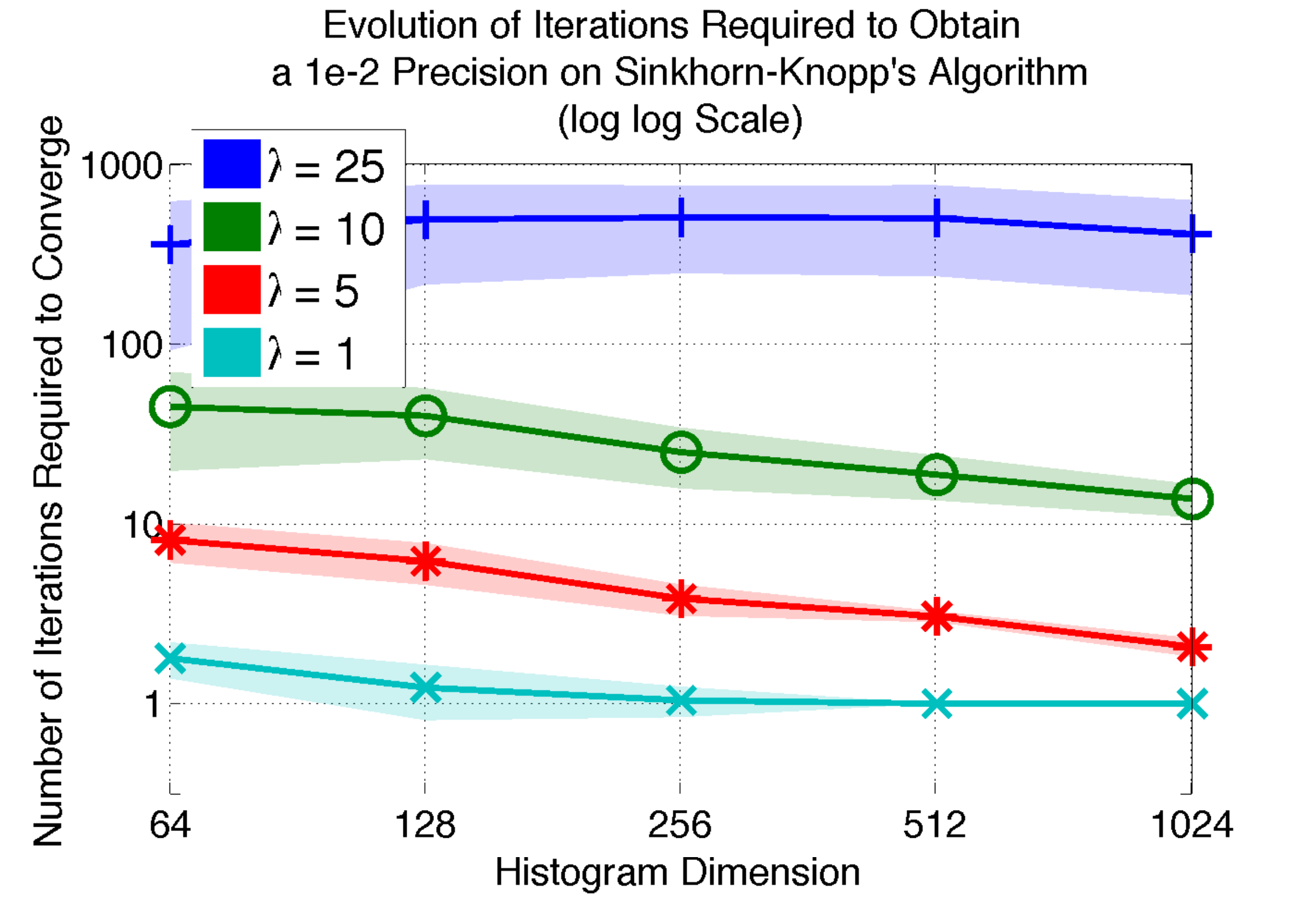}}
\caption{The influence of $\lambda$ on the number of iterations required to converge on histograms uniformly sampled from the simplex.}\label{fig:lambdacompt}
\end{figure}

\subsection{Empirical Complexity}
To provide an accurate picture of the actual number of steps required to guarantee the algorithm's convergence, we replicate the experiments of Section~\ref{subsec:computational} but focus now on the number of iterations of the loop described in Algorithm~\ref{alg:sk}. We use a tolerance of $0.01$ on the norm of the difference of two successive iterations of $x\in\RR^d$. As can be seen in Figure~\ref{fig:lambdacompt}, the number of iterations required so that $\norm{x-x'}_2\leq 0.01$ increases as $e^{-\lambda M}$ becomes diagonally dominant. From a practical perspective, and because keeping track of the change of $x$ at each iteration can be costly on parallel platforms, we recommend setting a fixed number of iterations that only depends on the value of $\lambda$. With that modification, and when computing the distance of a point $r$ to a family of points $C$, we obtain speedups by using GPGPU's which are even larger than those displayed in Figure~\ref{fig:comptime}.

\section{Conclusion}
We have shown that regularizing the optimal transportation problem with an intuitive entropic penalty opens the door for new research directions and potential applications at the intersection of optimal transportation theory and machine learning. This regularization guarantees speed-ups that are effective whatever the structure of the ground metric $M$. Based on preliminary evidence, it seems that Sinkhorn distances do not perform worse than the EMD, and may in fact perform better in applications. Sinkhorn distances are parameterized by a regularization weight $\lambda$ which should be tuned having both computational and performance objectives in mind, but we have not observed a need to establish a trade-off between both. Indeed, reasonably small values of $\lambda$ seem to perform better than large ones.

\section{Appendix: Proofs}

\begin{proof}[Proof of Property~\ref{prop:nsd}]
The set $U_1(r,c)$ contains all joint probabilities $P$ for which $h(P)=h(r)+h(c)$. In that case~\citep[Theorem 2.6.6]{cover91elements} applies and $U_1(r,c)$ can only be equal to the singleton $\{rc^T\}$. If $M$ is negative definite, there exists vectors $(\varphi_1,\cdots,\varphi_d)$ in some Euclidean space $\RR^n$ such that $m_{ij}=\norm{\varphi_i-\varphi_j}_2^2$	through \citep[\S3.3.2]{berg84harmonic}. We thus have that 
$$\begin{aligned}r^TM c&= \sum_{ij}r_i c_j \norm{\varphi_i-\varphi_j}^2= (\sum_{i}r_i \norm{\varphi_i}^2 + \sum_{i} c_i \norm{\varphi_i}^2) - 2\sum_{ij} \dotprod{r_i\varphi_i}{c_j\varphi_j}\\&= r^Tu + c^Tu - 2 r^TK c\end{aligned}$$
where $u_i=\norm{\phi_i}^2$ and $K_{ij}=\dotprod{\varphi_i}{\varphi_j}$. We used the fact that $\sum r_i=\sum c_i=1$ to go from the first to the second equality. $r^TMc$ is thus a n.d. kernel because it is the sum of two n.d. kernels: the first term $(r^Tu+c^Tu)$ is the sum of the same function evaluated separately on $r$ and $c$, and thus a negative definite kernel~\citep[\S3.2.10]{berg84harmonic}; the latter term $-2r^TKu$ is negative definite as minus a positive definite kernel~\citep[Definition \S3.1.1]{berg84harmonic}.\end{proof}	
\emph{Remark.} The proof above suggests a faster way to compute the Independence kernel. Given a matrix $M$, one can indeed pre-compute the vector of norms $u$ as well as a Cholesky factor $L$ of $K$ above to preprocess a dataset of histograms by premultiplying each observations $r_i$ by $L$ and only store $Lr_i$ as well as precomputing its diagonal term $r_i^Tu$. Note that the independence kernel is positive definite on histograms with the same 1-norm, but is no longer positive definite for arbitrary vectors.

\begin{proof}[Proof of Lemma~\ref{lem:gluing}]
	Let $T$ be the a probability distribution on $\{1,\cdots,d\}^d$ whose coefficients are defined as 
\begin{equation}\label{eq:T}t_{ijk}\defeq \frac{p_{ij}q_{jk}}{y_j},\end{equation}
	for all indices $j$ such that $y_j>0$. For indices $j$ such that $y_j=0$, all values $t_{ijk}$ are set to $0$.

Let	$S \defeq [ \sum_{j} t_{ijk}]_{ik}$. $S$ is a transportation matrix between $x$ and $z$. Indeed, 
	$$\begin{aligned}
	\sum_{i} \sum_{j} s_{ijk}&= \sum_{j} \sum_{i}\frac{p_{ij}q_{jk}}{y_j}= \sum_{j} \frac{q_{jk}}{y_j} \sum_{i} p_{ij} =\sum_{j} \frac{q_{jk}}{y_j} y_j=\sum_j q_{jk}=z_k \text{  (column sums) }\\
	\sum_{k} \sum_{j} s_{ijk}&= \sum_{j} \sum_{k}\frac{p_{ij}q_{jk}}{y_j}= \sum_{j} \frac{p_{ij}}{y_j} \sum_{k}q_{jk} =\sum_{j} \frac{p_{ij}}{y_j} y_j=\sum_j p_{ij}=x_i \text{  (row sums) }
	\end{aligned}
	$$
	We now prove that $h(S)\geq h(x)+h(z)-\alpha$. Let $(X,Y,Z)$ be three random variables jointly distributed as $T$. Since by definition of $T$ in Equation~\eqref{eq:T}
	$$p(X,Y,Z)=p(X,Y)p(Y,Z)/p(Y)= p(X)p(Y|X)p(Z|Y),$$ the triplet $(X,Y,Z)$ is a Markov chain $X\rightarrow Y \rightarrow Z$~\citep[Equation 2.118]{cover91elements} and thus, by virtue of the data processing inequality~\citep[Theorem 2.8.1]{cover91elements}, the following inequality between mutual informations applies:
$$\label{eq:dpi}I(X;Y)\geq I(X;Z),\text{ namely } \quad h(X,Z)-h(X)+h(Z)\geq h(X,Y) -h(X)+h(Y)\geq -\alpha.$$\end{proof}

{\small{\bibliographystyle{apa}

}}

\end{document}